\documentclass{article}

\usepackage[top=1in, bottom=1in, left=1in, right=1in]{geometry}

\usepackage{tikz}
\usepackage{pgfplots}
\pgfplotsset{compat=1.17}

\PassOptionsToPackage{numbers, compress}{natbib}

\usepackage{amsthm}

\usepackage[utf8]{inputenc}
\usepackage[T1]{fontenc}

\usepackage{booktabs}
\usepackage{longtable,tabularx}
\usepackage{microtype}
\usepackage{multicol}
\usepackage{multirow}
\usepackage{caption}
\usepackage{tcolorbox}
\usepackage{savesym}
\usepackage{algorithm}
\usepackage{algorithmic}
\usepackage{subfigure}

\usepackage{thm-restate}

\savesymbol{Bbbk}
\usepackage{amsmath,amssymb,amsfonts}
\usepackage{url}
\usepackage{tikz}
\usepackage{mathrsfs}
\usepackage{nicefrac,bbm}
\usepackage{hyperref}
\hypersetup{
  colorlinks=true,
  linkcolor=green!30!black,
  linktoc=all,
  citecolor=blue,
  bookmarksnumbered=true,
  pdfproducer={},
  pdfinfo={CreationDate={2025},ModDate={2025}}, 
  pdfauthor={},
  pdfkeywords={},
  pdfsubject={},
  pdfcreator={},
  pdflang={}
}
\usepackage{cleveref}
\usepackage{mathtools,pifont,enumitem}
\usepackage[framemethod=TikZ]{mdframed}
\usepackage{color,fancybox,graphicx,subfigure}

\savesymbol{st}
\usepackage{soul}
\restoresymbol{SOUL}{st}
\usepackage[normalem]{ulem}

\usepackage{xurl}
\usepackage{scalerel}
\usetikzlibrary{math}
\usepackage{bbding}
\usepackage{array}
\usepackage{dsfont}

\usepackage{empheq}
\usepackage{stackengine}

\newtheorem{theorem}{Theorem}[section]

\newtheorem{definition}[theorem]{Definition}

\newtheorem{proposition}[theorem]{Proposition}

\newtheorem{remark}[theorem]{Remark}

\crefname{section}{Section}{Sections}
\crefname{theorem}{Theorem}{Theorems}
\crefname{assumption}{Assumption}{Assumptions}
\crefname{lemma}{Lemma}{Lemmas}
\crefname{definition}{Definition}{Definitions}
\crefname{conjecture}{Conjecture}{Conjectures}
\crefname{corollary}{Corollary}{Corollaries}
\crefname{construction}{Construction}{Constructions}
\crefname{claim}{Proposition}{Propositions}
\crefname{observation}{Observation}{Observations}
\crefname{proposition}{Proposition}{Propositions}
\crefname{fact}{Fact}{Facts}
\crefname{question}{Question}{Questions}
\crefname{problem}{Problem}{Problems}
\crefname{remark}{Remark}{Remarks}
\crefname{example}{Example}{Examples}
\crefname{equation}{Equation}{Equations}
\crefname{appendix}{Appendix}{Appendices}
\crefname{algorithm}{Algorithm}{Algorithms}
\crefname{model}{Model}{Models}
\crefname{figure}{Figure}{Figures}

\AfterEndEnvironment{definition}{\noindent\ignorespaces}
\AfterEndEnvironment{assumption}{\noindent\ignorespaces}
\AfterEndEnvironment{lemma}{\noindent\ignorespaces}
\AfterEndEnvironment{theorem}{\noindent\ignorespaces}
\AfterEndEnvironment{proposition}{\noindent\ignorespaces}
\AfterEndEnvironment{fact}{\noindent\ignorespaces}
\AfterEndEnvironment{question}{\noindent\ignorespaces}

\usepackage{xfrac}
\newcommand{\bert}[1]{$\textsc{Bert}_{\text{base}}$}

\newcommand{\red}[1]{\textcolor{red}{#1}}

\renewcommand{\epsilon}{\varepsilon}

\newcommand{\Ex}{\operatornamewithlimits{\mathbb{E}}}

\newcommand{\todo}[1]{\red{[TODO: #1]}}

\title{Less is More: Adaptive Coverage for Synthetic Training Data}

\author{Sasan Tavakkol \\ Google Research
\and Max Springer  \\ University of Maryland
\and MohammadHossein Bateni \\ Google Research
\and Vincent Cohen-Addad \\ Google Research
\and Neslihan Bulut \\ Google Research
\and MohammadTaghi Hajiaghayi \\ University of Maryland}

\date{}

\begin{document}

\maketitle

\begin{abstract}
Synthetic training data generation with Large Language Models (LLMs) like Google's Gemma and OpenAI's GPT offer a promising solution to the challenge of obtaining large, labeled datasets for training classifiers. 
When rapid model deployment is critical, such as in classifying emerging social media trends or combating new forms of online abuse tied to current events, the ability to generate training data is invaluable.
While prior research has examined the comparability of synthetic data to human-labeled data, this study introduces a novel sampling algorithm, based on the maximum coverage problem, to select a representative subset from a synthetically generated dataset. 
Our results demonstrate that training a classifier on this contextually sampled subset achieves superior performance compared to training on the entire dataset. 
This ``less is more'' approach not only improves model accuracy but also reduces the volume of data required, leading to potentially more efficient model fine-tuning.
\end{abstract}

\newpage

\tableofcontents

\newpage

\section{Introduction} \label{sec.intro}
In recent years, the remarkable advancement in large language models (LLMs) such as OpenAI's GPT~\cite{achiam2023gpt} or Google's Gemma~\cite{team2024gemma} have dramatically expanded the capability to generate extensive synthetic textual data.
Such synthetic data promises substantial utility for training machine learning models, especially in domains where human-labeled data are prohibitively costly, inaccessible due to privacy or ethical constraints, or impractical to acquire at scale~\cite{bunte2021hard,ding2022gpt}.
Consequently, synthetic data generation has quickly become an appealing alternative for tuning models for various downstream tasks, including text classification, sentiment analysis, relation extraction, and information retrieval~\cite{meng2022generating}.

However, the mere abundance of synthetic data does not guarantee superior model performance. 
Increasing evidence demonstrates that naively utilizing large synthetic datasets introduces critical pitfalls: notably, redundancy and imbalance~\cite{gandhi2024better,liu2024best,long2024llms}.
LLM-generated samples frequently exhibit redundancy by over-representing certain common patterns or phrases, potentially saturating datasets with semantically repetitive information.
Consider hate speech detection, where nuanced distinctions between offensive, sarcastic, or context-dependent language are crucial: when prompted to generate training examples, an LLM may produce many straightforwardly toxic utterances, yet underrepresent borderline, coded, or indirect forms of harm~\cite{gandhi2024better,hao2024synthetic}.
Such skewed representation not only dilutes the informative value of synthetic datasets but actively harms model generalization and robustness by obscuring valuable minority cases.
Consequently, models trained on these synthetic corpora risk becoming overly specialized on frequent cases, compromising predictive accuracy on more nuanced real-world scenarios.

Motivated by these gaps, we propose \underline{A}daptive \underline{C}overage \underline{S}ampling (ACS), a novel method that effectively curates synthetic text datasets by carefully balancing redundancy, representational diversity, and computational efficiency.
ACS uniquely frames synthetic data downsampling as a structured maximum-coverage optimization problem defined over a graph representation of the data.
Specifically, synthetic text samples are first embedded into a latent semantic space, forming nodes within a complete graph where edges represent semantic similarity.
Our approach leverages a binary search to systematically determine the optimal similarity threshold for edge pruning, thus inducing a sparser subgraph.
Subsequently, a greedy maximum-coverage approximation algorithm selects the subset of $k$ samples maximizing representational coverage, where coverage is defined as the proportion of the dataset ``covered'' by the selected subset and its similarity-neighbors.

\begin{figure}[t]
    \centering
    \includegraphics[width=\linewidth]{figures/pipeline-2.png}
    \caption{Overview of the ACS pipeline. (1) Prompt an LLM to generate a large pool of synthetic samples under user‐specified constraints. (2) Samples are embedded into a semantic space and connected into a complete, weighted similarity graph. (3) Perform a binary search over edge‐weight threshold to induce a subgraph. (4) Greedy max‐cover procedure then iteratively selects the $k$ nodes (highlighted in dark blue) that together cover desired fraction of the remaining graph (uncovered nodes depicted as white). (5) Selected subset is returned for downstream model training.
    }
    \label{fig:schematic}
\end{figure}

A key strength of ACS lies in its use of a theoretically grounded binary search procedure to tune the pruning threshold, automating the trade-off between dataset compactness and semantic coverage. 
This allows the method to systematically filter out repetitive or redundant samples that might otherwise hinder model performance.

We evaluate ACS across several NLP tasks—including sentiment classification, relation extraction, and named entity recognition—and find that models fine-tuned on ACS-selected subsets match or outperform those trained on full synthetic datasets typically with just 10–30\% of the original corpus.
These results highlight the promise of principled data selection in synthetic data regimes: by identifying compact yet diverse training sets, ACS improves generalization while significantly reducing training compute cost. 
In doing so, our approach moves beyond heuristic-driven methods, offering a scalable and theoretically informed path toward more effective use of synthetic data.

\section{Related Work} \label{sec.rw}
\paragraph{Large Language Models.}
LLMs, built upon the transformer architecture introduced by~\cite{vaswani2017attention}, have transformed language processing, achieving unprecedented performance across a broad spectrum of tasks including language modeling, translation, classification, and question-answering~\cite{brown2020language,rae2021scaling,taylor2022galactica}.
These models leverage massive-scale pretraining on extensive datasets to encode rich linguistic and factual knowledge, enabling fluent and contextually relevant text generation~\cite{team2024gemma}.
Consequently, the sophistication of LLM-generated content increasingly blurs the line between synthetic and authentic human-written text~\cite{hartvigsen2022toxigen,sahu2022data,tang2023does,ye2022zerogen}.
This indistinguishability raises an intriguing question: under what conditions and to what extent can LLM-generated data replace or complement human-annotated examples for training machine learning models?

\paragraph{Synthetic Training Data Generation.}
High-quality datasets crucially underpin the performance and generalization capabilities of modern machine learning systems.
However, acquiring diverse and representative labeled data from human annotators is frequently costly, labor intensive, and fraught with privacy or ethical challenges~\cite{kurakin2023harnessing,gilardi2023chatgpt,hoskinghuman,singhbeyond}.
Moreover, human-generated annotations inherently carry biases or inconsistencies, potentially limiting their effectiveness in certain contexts.
To overcome these limitations, synthetic data generation has emerged as a promising alternative, aimed at artificially populating underrepresented data regions and mitigating biases or gaps in existing datasets~\cite{gandhi2024better,liu2024best}.

To address data scarcity in specialized or emerging domains, researchers frequently employ data augmentation techniques to enhance model robustness and accuracy~\cite{ding2020daga,wei2019eda}.
Moreover, semi-supervised learning~\cite{miyato2016adversarial}, multi-task learning~\cite{glorot2011domain}, unsupervised pretraining~\cite{devlin2018bert,raffel2020exploring}, and few-shot learning~\cite{deng2020low,he2021effectiveness} constitute alternative frameworks for learning from limited labeled examples.
However, while effective in certain contexts, these approaches typically presume access to at least some high-quality human-generated examples as seed data, limiting their broader applicability.

\paragraph{Leveraging LLMs for Synthetic Data.}
LLMs offer a compelling approach to synthetic data generation due to their fluency, versatility, and capacity to mimic diverse linguistic styles and content structures~\cite{ding2022gpt}.
Recent studies have demonstrated promising outcomes leveraging prompt-based methods (zero and few-shot) for generating training data for NLP tasks~\cite{long2024llms}.
The effectiveness of synthetic datasets produced by these models depends critically on task characteristics, including the complexity of label spaces~\cite{ding2022gpt}, the inherent subjectivity or ambiguity of the task~\cite{li2023synthetic}, and crucially, the diversity and representativeness of generated samples~\cite{hao2024synthetic}.
Though the models are promising, these factors can impede naively employed models trained on synthetic datasets, potentially exacerbating redundancy and bias.
Thus, underscoring the necessity of methods to carefully select or filter synthetic samples to maximize utility and minimize detrimental impacts.

\paragraph{Data Filtering and Downsampling.}
Filtering datasets to identify informative subsets for training constitutes a widely explored solution to the challenges posed by redundancy and imbalance, where conventional data selection techniques frequently rely on heuristic-based strategies and sample re-weighting schemes~\cite{albalak2024survey,coleman2019selection,maharana2023d2,paul2021deep,pleiss2020identifying,sorscher2022beyond,toneva2018empirical,xia2022moderate,zhengcoverage}.
These methods largely revolve around assigning differential importance to data points based on criteria such as correctness, informativeness, or influence on model parameters~\cite{guo2022deepcore}.

Heuristic approaches typically leverage training dynamics or statistical properties of samples.
For instance, dataset cartography~\cite{swayamdipta2020dataset} identifies and emphasizes data points classified as difficult or ambiguous through repeated training runs.
Influence functions quantify individual data sample contributions by approximating how their exclusion alters model parameters~\cite{koh2017understanding}.
Other methods, such as EL2N scoring~\cite{paul2021deep}, forgetting scores~\cite{toneva2018empirical}, and prototypicality assessments~\cite{sorscher2022beyond}, attempt to prioritize or prune samples based on specific diagnostic measures.
Recent studies have further explored the utility of LLM-based raters to directly score or filter synthetic samples based on quality heuristics.
Notably,~\cite{chenalpagasus} proposed AlpaGasus, demonstrating that a curated, high-quality synthetic subset significantly improves downstream model performance over full synthetic datasets. However, their approach entails repeated queries to an LLM to iteratively refine sample sets, yielding a black-box rating metric which necessitates a computational (and potentially monetary) overhead in addition to careful threshold tuning.

In contrast, our ACS methodology provides a principled, computationally efficient, and explainable solution for optimal synthetic subsets without extensive manual tuning or iterative refinement.
By formulating the selection problem as a graph-based maximum coverage optimization and leveraging an adaptive binary search to systematically adjust similarity thresholds, ACS ensures theoretical rigor and practical efficacy.
Crucially, ACS consistently demonstrates superior performance using significantly smaller synthetic subsets compared to prior filtering methods, thereby establishing a new benchmark for efficient and effective synthetic data utilization.

\section{Preliminaries \& Methodology} \label{sec.method}
In this section, we detail our comprehensive pipeline for curating a representative subset from large synthetic datasets, specifically designed to improve model training efficiency and downstream task performance.
We begin by describing the generation and preprocessing of synthetic textual data, then present multiple baseline downsampling methods employed for comparative evaluation.
Subsequently, we introduce and rigorously define our novel ACS method, highlighting its theoretical foundations and practical implementation.
Finally, we describe our approach for fine-tuning the BERT model with the selected subset.

\subsection{Synthetic Data Generation} \label{sec:method-data}
We utilize a synthetic corpus of text generated by GPT-3.5~\cite{achiam2023gpt}.
The corpus employed is based on established prompt templates tailored to specific downstream tasks (e.g. sentiment analysis), as detailed by prior work~\cite{ding2022gpt}.
Each dataset is balanced across labels to ensure sufficient diversity, carefully selecting an equal number of data points per label.
While synthetic datasets provide vast training material, redundancy frequently arises as similar semantic content is generated repeatedly~\cite{long2024llms}.

\subsection{Downsampling Methods.} \label{sec:method-down}
To mitigate redundancy and maximize representational coverage, we explore several distinct downsampling techniques.
Our goal is to select a subset of size $k < N$ from an initial corpus of size $N$, preserving data diversity while enhancing computational efficiency.

\paragraph{Baseline Methods.}
We benchmark our novel ACS approach against widely used benchmark methods.
\textbf{Random} sampling henceforth refers to uniformly at random selecting $k$ samples from the corpus. 
\textbf{EL2N}~\cite{paul2021deep} ranks samples by the average $L_2$ distance between model predictions and true labels across early training checkpoints, emphasizing persistently challenging examples.
\textbf{Forgetting scores}~\cite{toneva2018empirical} count transitions between correct and incorrect model predictions per sample during training, emphasizing samples near the decision boundary.
\textbf{Prototypicality}~\cite{sorscher2022beyond} which computes class-specific embeddings and prioritizes samples closest to their class centroids, capturing representative class examples. 
\textbf{LLM rater (AlpaGasus)}~\cite{chenalpagasus} employs GPT-3.5 to assign quality ratings to each synthetic input-output pair, retaining only the highest ranked samples, thereby enhancing subset quality through language-model-informed filtering.
Each baseline is implemented to rank the dataset according to the respective criteria, selecting the top $k$ samples for training.

\paragraph{Adaptive Coverage Sampling.}
ACS introduces a graph-based max-coverage sampling technique to systematically select representative subsets.
Samples are first embedded into a latent semantic space using Gecko embeddings~\cite{lee2024gecko}, though ACS is broadly compatible with alternative embedding methods.
We construct a similarity graph where each node represents a sample, and edges indicate cosine similarity exceeding a dynamic threshold.
This threshold is optimized via a binary search to achieve a user-specified graph coverage level.
Coverage formally quantifies representational breadth:
\begin{definition}[Coverage]
    Let $G = (V, E)$ be a graph with vertex set $V$, edge set $E$, and self-loop for all vertices. A subset $H \subseteq V$ of size $|H| = k$ achieves coverage $c \in [0,1]$ if 
    $$\left| \bigcup_{i \in H} N_i \right| = c \cdot |V|$$ where $N_i$ 
    is the neighborhood of vertex $i \in H$ (ie. $i$ covers the elements of $N_i$, including itself).
\end{definition}
A coverage of 1.0 thus ensures every node is either selected or directly adjacent to a selected node, while lower coverage levels strategically exclude less representative samples.
We leverage the following theorem guaranteeing monotonicity of an exact solution to the max cover problem with respect to similarity thresholds on the pruned graph, validating our subsequent binary search procedure.

\begin{theorem}\label{thm:coverage}
     Let $D$ be a dataset, and for each similarity threshold $s_i$, construct a similarity graph $G_i(V, E_i)$, where $V$ represents the data points and $(u, v) \in E_i$ if and only if the cosine similarity between $u$ and $v$ exceeds $s_i$. 
    Let $H_i \subseteq V$ be the set of $k$ samples selected by the max coverage algorithm on $G_i$, and let $c_i$ denote the coverage achieved by $H_i$. 
    For any two thresholds $s_i$ and $s_j$ such that $s_j < s_i$, the similarity graph $G_j(V, E_j)$ has a coverage $c_j \geq c_i$ when maximally covered by $k$ samples. 
\end{theorem}
\begin{proof}
    Consider two similarity thresholds $s_i$ and $s_j$ such that $s_j < s_i$. 
    The corresponding similarity graphs $G_i(V, E_i)$ and $G_j(V, E_j)$ are constructed by adding edges between data points whose cosine similarity exceeds $s_i$ and $s_j$, respectively. 
    Since $s_j < s_i$, it follows that $E_i \subseteq E_j$; that is, $G_j$ includes all the edges from $G_i$, possibly with additional edges.

    Now, let $H_i \subseteq V$ be the set of $k$ samples selected by the max coverage algorithm on $G_i$, which achieves coverage $c_i$. 
    The coverage $c_i$ is defined as the proportion of vertices in $V$ that are adjacent to at least one vertex in $H_i$.
    Since $E_i \subseteq E_j$, the set of neighbors of each vertex in $H_i$ in $G_i$ is a subset of the neighbors of the same vertex in $G_j$. 
    Therefore, the coverage achieved by $H_i$ in $G_j$ is at least as large as the coverage in $G_i$. 
    More formally, if $H_j$ is the set of $k$ samples selected by the max coverage algorithm on $G_j$, we have:
    \[
        c_j = \left| \bigcup_{v \in H_j} N_j(v) \right| \quad \text{and} \quad c_i = \left| \bigcup_{v \in H_i} N_i(v) \right|,
    \]
    where $N_j(v)$ and $N_i(v)$ denote the neighborhoods of $v$ in $G_j$ and $G_i$, respectively. 
    Since $E_i \subseteq E_j$, we have $N_i(v) \subseteq N_j(v)$ for all $v \in V$, implying that the coverage in $G_j$ is at least as large as the coverage in $G_i$. 
    Therefore, $c_j \geq c_i$. 
\end{proof}
The monotonicity of coverage allows us to find the largest similarity threshold that achieves a coverage equal to, or greater than, the target coverage. This thresholding ensures that the max coverage component of ACS focuses on the most relevant and diverse samples to achieve the target coverage. We note that the max coverage problem is NP-hard \cite{feige1998threshold}, and that our implementation uses the greedy approximation \cite{hochbaum1996approximating} which is not guaranteed to be monotonic. However, we show that, in practice, this monotonicity persists (see Section~\ref{sec:monotonic}).

Leveraging this result, we conduct a binary search on the similarity threshold for edge pruning and execute the greedy max cover algorithm. 
Specifically, we sequentially select the node of highest degree, add the selected node and all of its neighbors to the set of ``covered nodes'' and repeat until $k$ nodes are selected. 
We subsequently compute the coverage of the full dataset from the selected subset and, based on this coverage's deviation from the target, adjust the threshold in accordance with the binary search until convergence.
The $k$ selected points from the max cover execution on the optimally pruned graph are finally returned.

To ensure scalability and enhance representational diversity, we impose a maximum nearest neighbors constraint per node, significantly reducing computational complexity and ensuring effective coverage.
Specifically, we define a strict constraint $d_{\max}$, a bound ensuring sufficient but limited graph connectivity, derived via the extended pigeonhole principle: $d_{\max} > \sfrac{c N }{k}$.
This constraint further improves computational tractability and sample diversification, analgous to scalability techniques like Locality-Sensitive Hashing (LSH) with limited bucket sizes~\cite{shekkizhar2023data}. 

\subsection{Comparative Experiments} \label{method:bert}
After generating and downsampling the synthetic dataset to obtain \( k \) training samples, we employ two comparative measures.
First, we fine-tune a BERT model \cite{devlin2018bert} on the selected subset and report the F1-scores as a function of the number of subsamples selected for model training\footnote{We here use F1, rather than accuracy, to remain robust to potential class imbalances in the test sets.}. 
We use the \bert{}, uncased model (108 million parameters) and fine-tune it for multiple epochs (the exact number is defined for each respective experiment in Section~\ref{sec.fine-tune}).
The model's weights are mostly initialized using pre-trained weights, while the parameters of the final classification layer (2048 units) are randomly initialized. 
Specifically, the weights of this layer are initialized from a normal distribution with a mean of 0 and a standard deviation of 0.02, following standard practices for fine-tuning transformer-based models.
\cite{devlin2018bert, dodge2020fine, lan2019albert, liu2019roberta}.  
The fine-tuning process uses a batch size of 16, a learning rate of \( 2 \times 10^{-5} \), and a dropout rate of 0.1. 
All experiments are conducted on a high-performance GPU cluster with 16GB of RAM, with \( n=5 \) distinct random seeds used for model initialization.  
Details of the implementation, including all hyperparameters, are provided in the supplementary material, along with the training codes.

Second, we compute the self-bilingual evaluation understudy (or SelfBLEU) metric as a quantifiable measure of subset diversity~\cite{zhu2018texygen}. This widely used metric computes word
similarity between sentences or documents within a dataset.
Crucially, a higher SelfBLEU score indicates a dataset with higher self-similarity.
Thus, the \emph{reciprocal} of this metric is used as a diversity measure (higher self-similarity implies less diversity in the set).

\section{Empirical Analysis of ACS} \label{sec.empirics}
In this section, we empirically validate critical aspects of our sampling method.
Specifically, we first verify the empirical monotonicity of coverage as a function of similarity threhsold for the greedy approximate algorithm for the max coverage problem, aligning with the theoretical guarantees provided by Theorem~\ref{thm:coverage}.
We then systematically identify and analyze the coverage parameter value, demonstrating that coverage below 1.0 consistently yields better performance in downstream NLP tasks.

\subsection{Empirical Validation of Monotonicity} \label{sec:monotonic}
A central assumption is ACS is that coverage monotonically increases or remains constant as the similarity threshold decreases, as formally established for the exact max coverage solution.
To confirm this assumption's practical validity under the greedy approximation algorithm~\cite{hochbaum1996approximating}, we conducted detailed empirical experiments across varying similarity thresholds.

We focus initially on the synthetic textual data generated to emulate the SST2 sentiment analysis task~\cite{socher2013recursive}.
This synthetic dataset comprises short movie reviews labeled as positive or negative sentiments.
Additional validation on other datasets, 
is provided in the supplementary materials.

Each text sample was first embedded into a latent semantic space using Gecko embeddings~\cite{lee2024gecko}.
Subsequently, similarity graphs were constructed for multiple fixed similarity thresholds, after which the greedy max-coverage approximation algorithm was executed to select subsets of varying sizes $k$.
As illustrated in the left-hand plot of Figure~\ref{fig:empirical-acs}, coverage consistently exhibits monotonic behavior: as the similarity threshold decreases (adding more edges), coverage either remains constant or strictly increases, validating our core theoretical assumption in practical scenarios.
Notably, the maximum possible coverage (full coverage, $c=1$) is achieved quickly at lower thresholds, while all plots achieve a minimal coverage of $c = \sfrac{k}{N}$.

\begin{figure}[t]
    \centering
        \includegraphics[width=\linewidth]{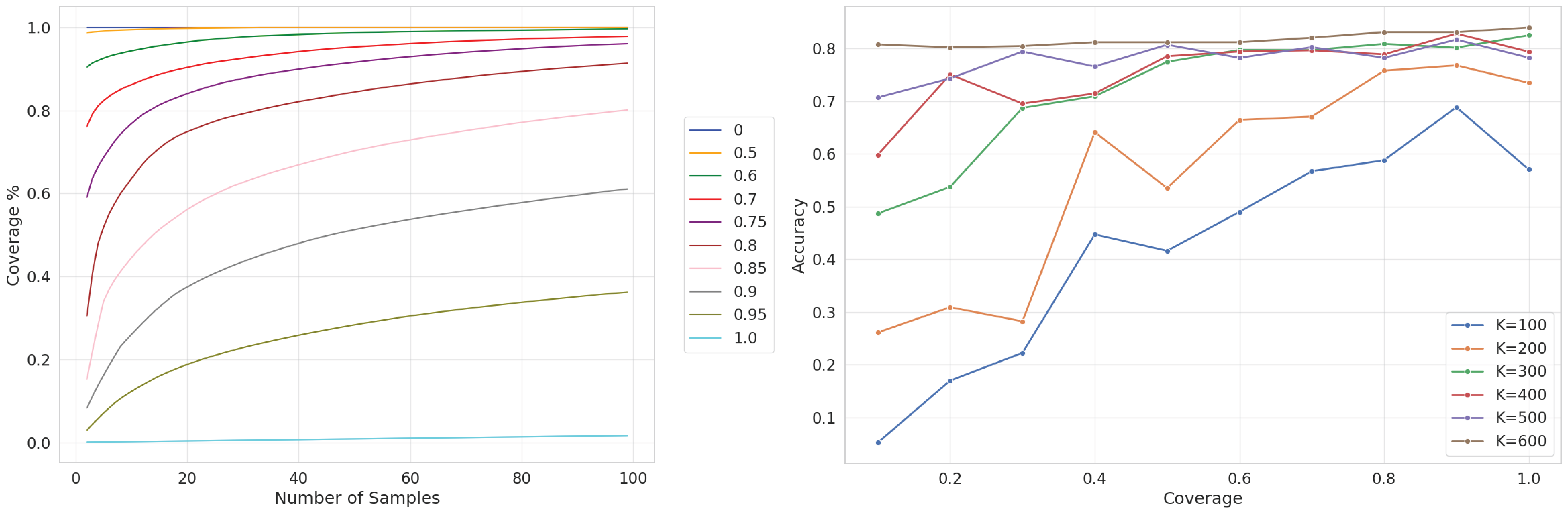}
    \caption{(L) Coverage of data increases with $k$ or when decreasing the similarity threshold. Colors correspond to the fixed similarity thresholds depicted in the legend. (R) Model accuracy as a function of coverage level for the sentiment analysis tasks. Performance peaks at a coverage level below 1.0.}
    \label{fig:empirical-acs}
\end{figure}


\subsection{Determining the Optimal Coverage Level} \label{sec:coverage}
While full coverage ($c=1$) intuitively seems optimal, in practice, we demonstrate that lower coverage values yield better model performance. 
We hypothesize that this is due to the exclusion of redundant or noisy samples.
To systematically investigate this, we varied the coverage parameter across a broad range of values, maintaining a fixed subset size of $k$ for the synthetic SST2 dataset (with analogous findings for additional tasks reported in the Appendix~\ref{sec:empirics-all}).

For each coverage setting, ACS selected a subset of exactly $k$ samples which achieved an effective coverage of the target.
Using these subsets, we fine-tuned \bert{} models and evaluated their accuracy on a human-annotated test set.
The resulting accuracy trends are presented on the right Figure~\ref{fig:empirical-acs}.
Notably, accuracy significantly improves as the target coverage increases from lower levels, reflecting greater representational completeness.
However, accuracy consistently peaks before reaching full coverage, with performance slightly deteriorating at or near the full coverage ($c=1$).
These results robustly support our assertion that carefully selecting subsets with moderate coverage offers superior model generalization and training efficiency.

\section{Fine-Tuning for Downstream Tasks} \label{sec.fine-tune}
In this section, we rigorously evalute the performance of ACS against several established baseline downsampling methods on multiple NLP benchmarks.
Specifically, we assess sequence-level tasks (sentiment analysis and relation extraction) and a token-level task (named entity recognition), demonstrating ACS's consistent advantages in terms of model performance and diversity of selected subsets.
These evaluations compliment one another: improved performance corresponding to higher diversity in the selected subsets and vice versa. 

\subsection{Sequence-Level Tasks}
\paragraph{Sentiment Analysis} \label{sec:sst2}
We first evaluate our approach on the binary sentiment classification task using the synthetic corpus from~\cite{ding2022gpt}, designed to emulate the SST2 dataset~\cite{socher2013recursive}.
This dataset contains $N = 6,000$ synthetic movie reviews, equally split between positive and negative sentiments.
Following the prior literature~\cite{ding2022gpt}, we fine-tune a \bert{} model for 32 epochs (with early stopping) on subsets selected by each downsampling method.

\paragraph{Results.} Figure~\ref{fig:sst2-results-combined} compares the performance (F1-score) of ACS against the baseline methods, averaging results over five random initializations of the BERT classification layer weights.
ACS consistently outperforms alternative methods across all subset sizes, with particularly notable improvements at smaller subset sizes.
Remarkably, ACS achieves performance comparable to training on the \emph{full} synthetic dataset (black dashed line) using only approximately 10\% of the data, underscoring its effectiveness in isolating highly informative samples.
We note that while ACS performs the best, all methods (apart from random) yield aggressively pruned datasets which can match performance on the full dataset. 
This suggests that for the simpler task of positive or negative sentiment detection, only a few meaningful examples are needed to train a sophisticated classifier to effectively categorize the inputs.

To further elucidate why ACS in particular achieves superior performance, Figure \ref{fig:sst2-results-combined} further plots diversity (inverse SelfBLEU score) across subset sizes. 
ACS-selected subsets consistently exhibit greater diversity compared to baseline methods, strongly correlating with improved downstream task performance. 
This enhanced diversity seems to mitigate redundancy and better equips models to generalize effectively, particularly when training data sizes are limited.

\begin{figure}[t]
    \centering
    \begin{minipage}[c]{0.49\linewidth}
        \centering
        \includegraphics[width=\linewidth]{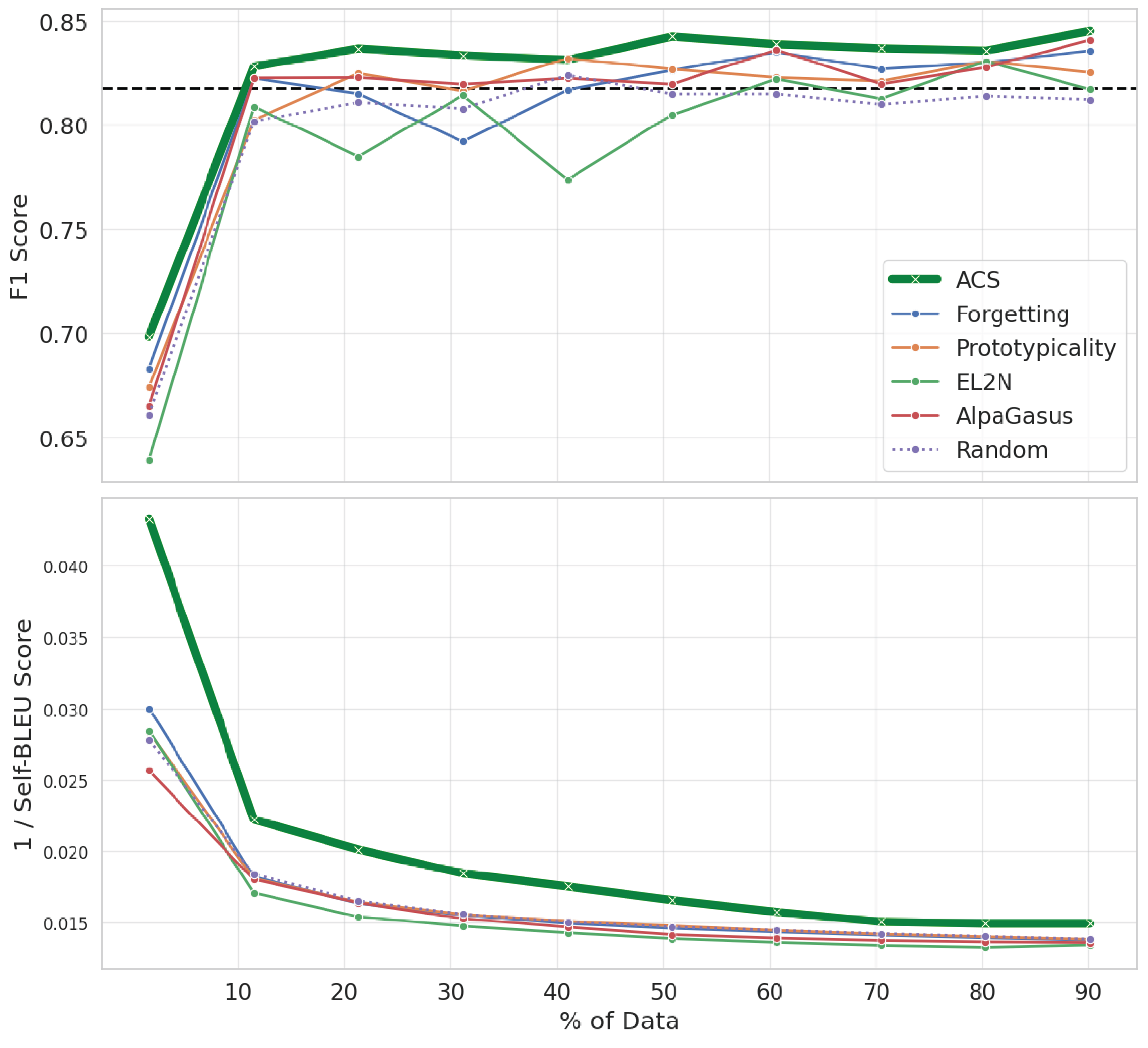}
    \end{minipage}
    \hfill
    \begin{minipage}[c]{0.49\linewidth}
        \centering
        \scriptsize
        \centering
        \setlength{\tabcolsep}{4pt}
        \renewcommand{\arraystretch}{1}
        \begin{tabular}{@{}r|cccccc@{}}
        \textbf{$\%$} & \rotatebox{60}{Random} & \rotatebox{60}{AlpaGasus} & \rotatebox{60}{Forgetting} & \rotatebox{60}{Prototypicality} & \rotatebox{60}{EL2N} & \rotatebox{60}{ACS} \\
        \midrule
        10 & 0.8018 & 0.8225 & 0.8223 & 0.8024 & 0.8089 & \textbf{0.8280} \\
        20 & 0.8108 & 0.8227 & 0.8149 & 0.8246 & 0.7848 & \textbf{0.8368} \\
        30 & 0.8079 & 0.8195 & 0.7919 & 0.8163 & 0.8142 & \textbf{0.8335} \\
        40 & 0.8238 & 0.8223 & 0.8168 & \textbf{0.8319} & 0.7737 & 0.8312 \\
        50 & 0.8148 & 0.8195 & 0.8262 & 0.8267 & 0.8049 & \textbf{0.8425} \\
        60 & 0.8148 & 0.8360 & 0.8350 & 0.8227 & 0.8220 & \textbf{0.8388} \\
        70 & 0.8099 & 0.8195 & 0.8268 & 0.8209 & 0.8125 & \textbf{0.8368} \\
        80 & 0.8139 & 0.8275 & 0.8298 & 0.8304 & 0.8305 & \textbf{0.8357} \\
        90 & 0.8122 & 0.8409 & 0.8357 & 0.8251 & 0.8170 & \textbf{0.8452} \\
        \midrule
        100 & 0.8176 & 0.8176 & 0.8176 & 0.8176 & 0.8176 & 0.8176
        \end{tabular}
    \end{minipage}
    \caption{(L) F1 scores (top) and SelfBLEU diversity (bottom) for SST2 as a function of subset size, comparing downsampling methods. Horizontal dotted line represents model performance when trained on all available data (no pruning). (R) Tabulated F1 results corresponding with plots. ACS matches or outperforms full-data training with only 10\% of the data.}
    \label{fig:sst2-results-combined}
\end{figure}


\paragraph{Relation Extraction} \label{sec:FewRel}
Relation extraction, exemplified by the FewRel dataset~\cite{han2018fewrel}, represents a significantly more challenging classification task due to its large set of 64 distinct relation labels.
The task involves predicting the labeled relation between two marked entities within a sentence, necessitating both greater diversity and precision in the synthetic data generation process. 
For instance, the sentence, “Chester Alan Arthur, 21st President of the United States, died of this disease on November 18, 1886,” could be labeled with the relation “head of government” to capture the connection between Arthur and his role as President.
This increased complexity necessitates careful selection of diverse and informative examples.
We employ the synthetic corpus of relation extraction data from~\cite{ding2022gpt}, uniformly sampling $N = 12,800$ examples spread across all relation labels in accordance with the FewRel dataset.
The \bert{} model is fine-tuned over 3 epochs as in the prior work.

\paragraph{Results.} Figure~\ref{fig:fewrel-results-combined} presents the F1-score results on the synthetic FewRel dataset, clearly demonstrating that ACS consistently surpasses the baseline methods at nearly all data subsampling proportions.
Similar to the sentiment analysis task, ACS achieves competitive or superior performance using just 30\% of the available synthetic data.
Figure~\ref{fig:fewrel-results-combined} provides additional support by showing that subsets selected by ACS, again, obtain substantially lower SelfBLEU scores, indicating greater representational diversity.
This enhanced diversity is particularly valuable for relation extraction, which benefits from nuance and varied training examples to better capture the complex semantic relations between entities.

\begin{figure}[t]
    \centering
    \begin{minipage}[c]{0.49\linewidth}
        \centering
        \includegraphics[width=\linewidth]{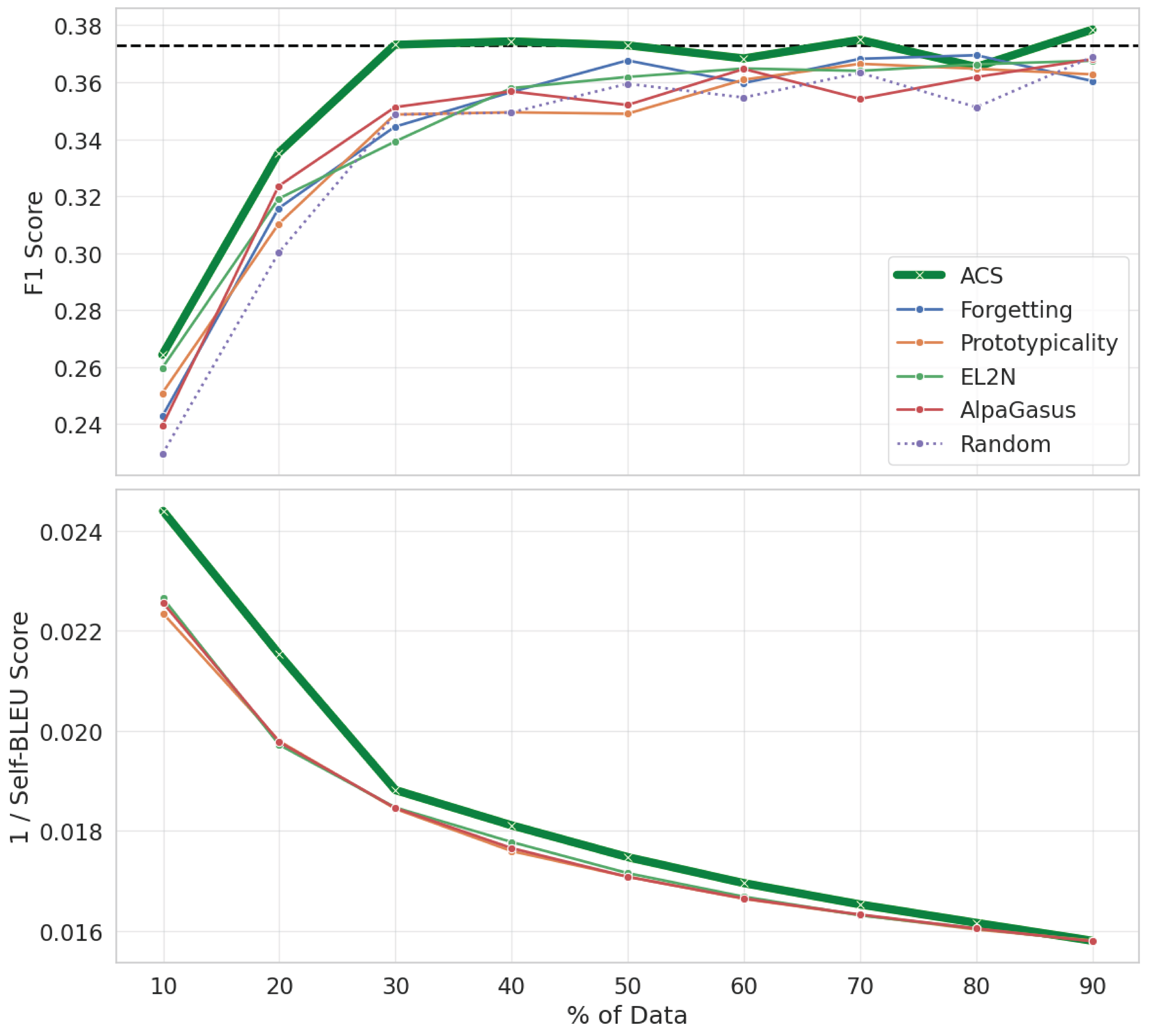}
    \end{minipage}
    \hfill
    \begin{minipage}[c]{0.49\linewidth}
        \centering
        \scriptsize
        \centering
        \setlength{\tabcolsep}{4pt}
        \renewcommand{\arraystretch}{1}
        \begin{tabular}{@{}r|cccccc@{}}
        \textbf{$\%$} & \rotatebox{60}{Random} & \rotatebox{60}{AlpaGasus} & \rotatebox{60}{Forgetting} & \rotatebox{60}{Prototypicality} & \rotatebox{60}{EL2N} & \rotatebox{60}{ACS} \\
        \midrule
        10  & 0.2293 & 0.2393 & 0.2427 & 0.2508 & 0.2597 & \textbf{0.2642} \\
        20  & 0.3000 & 0.3235 & 0.3157 & 0.3102 & 0.3191 & \textbf{0.3352} \\
        30  & 0.3486 & 0.3512 & 0.3444 & 0.3487 & 0.3391 & \textbf{0.3732} \\
        40  & 0.3494 & 0.3568 & 0.3566 & 0.3494 & 0.3579 & \textbf{0.3744} \\
        50  & 0.3595 & 0.3520 & 0.3677 & 0.3489 & 0.3619 & \textbf{0.3730} \\
        60  & 0.3546 & 0.3647 & 0.3597 & 0.3609 & 0.3648 & \textbf{0.3684} \\
        70  & 0.3634 & 0.3542 & 0.3682 & 0.3665 & 0.3640 & \textbf{0.3749} \\
        80  & 0.3512 & 0.3618 & 0.3695 & 0.3647 & \textbf{0.3663} & 0.3654 \\
        90  & 0.3689 & 0.3681 & 0.3604 & 0.3628 & 0.3676 & \textbf{0.3785} \\
        \midrule
        100 & 0.3729 & 0.3729 & 0.3729 & 0.3729 & 0.3729 & 0.3729
        \end{tabular}
    \end{minipage}
    \caption{(L) F1 scores (top) and SelfBLEU diversity (bottom) for FewRel as a function of subset size, comparing downsampling methods. Horizontal dotted line represents model performance when trained on all available data (no pruning). (R) Tabulated F1 results corresponding with plots. ACS matches or outperforms full-data training with only 30\% of the data.}
    \label{fig:fewrel-results-combined}
\end{figure}


\subsection{Token-Level Task: Named Entity Recognition}
We lastly validate ACS on the token-level named entity recognition (NER) task using a synthetic corpus generated to match the AI domain split of CrossNER~\cite{liu2021crossner}. 
This task involves labeling each token in a sentence with one of 14 distinct entity classes or a null identifier. 
For example, on the sentence: ``We evaluated BERT using the SQuAD benchmark and compared its performance with BiDAF on multiple F1-score metrics.'' a classifier would have to mark the relevant tokens (BERT, SQuAD, BiDaF, F1-score) with the labels (Tool, Dataset, Tool, Metric) respectively.
The synthetic corpus used here contains $N = 3,000$ sentences, each carefully generated to reflect diverse entity mentions. 
We crucially highlight for this \emph{token-level} classification task, we still deploy ACS on the \emph{sentence} embeddings to isolate the most representative samples.
The selected sentences are subsequently parsed back into their tokenization for classification.
We fine-tune a \bert{} model specific to the NER task~\cite{rajapakse2024simple} over 50 epochs, applying early stopping to prevent overfitting.

\paragraph{Results.} Figure \ref{fig:crossner-results-combined} illustrates ACS’s performance on the token-level classification task. 
Using only 20\% of the original synthetic dataset, ACS achieves accuracy comparable to training on the entire dataset. 
Furthermore, ACS consistently selects subsets with notably greater diversity, as evidenced by lower SelfBLEU scores compared to baselines. 
This confirms ACS's capability to effectively capture a wide representation of the dataset, even for precise token-level predictions. 

\begin{figure}[t]
    \centering
    \begin{minipage}[c]{0.49\linewidth}
        \centering
        \includegraphics[width=\linewidth]{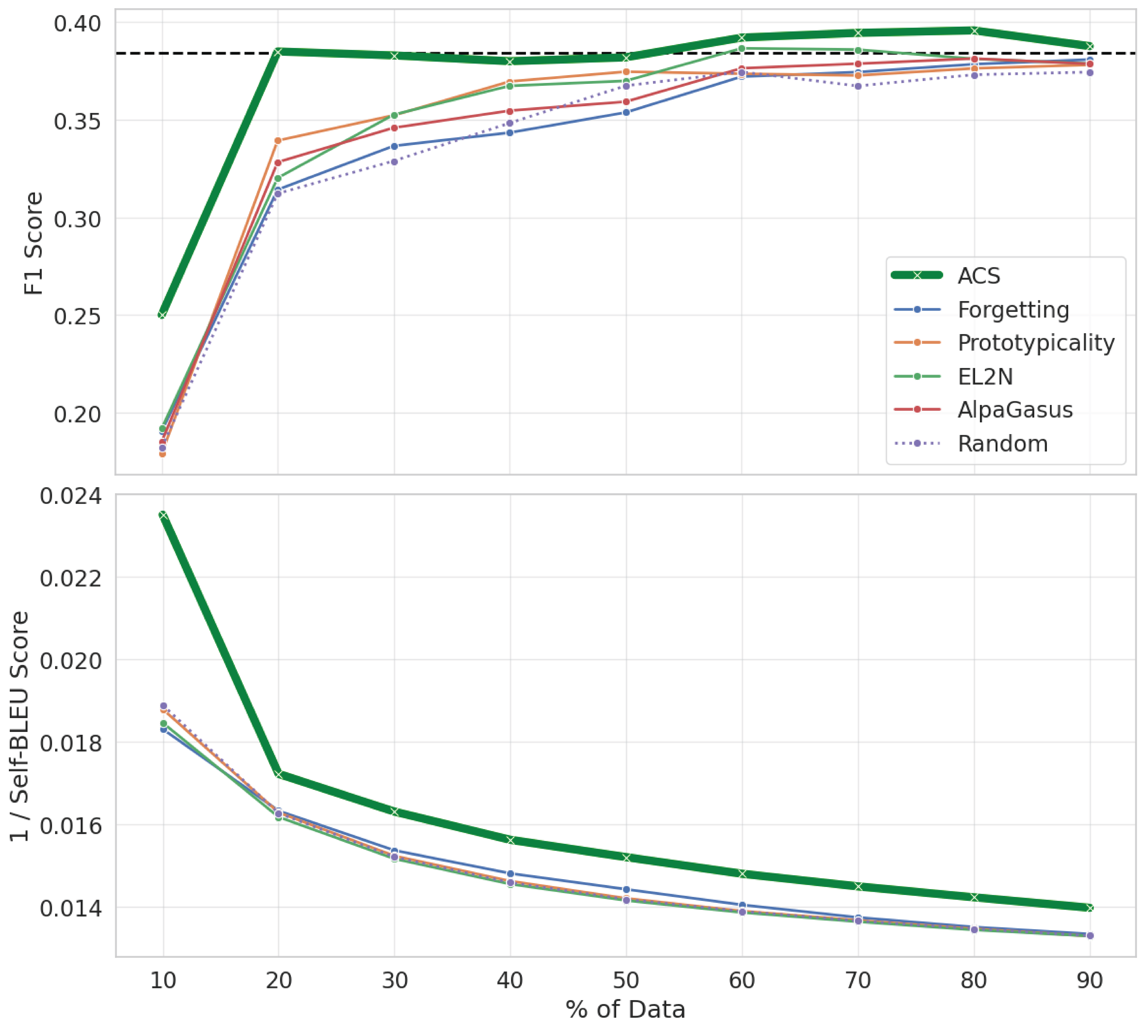}
    \end{minipage}
    \hfill
    \begin{minipage}[c]{0.49\linewidth}
        \centering
        \scriptsize
        \centering
        \setlength{\tabcolsep}{4pt}
        \renewcommand{\arraystretch}{1}
        \begin{tabular}{@{}r|cccccc@{}}
        \textbf{$\%$} & \rotatebox{60}{Random} & \rotatebox{60}{AlpaGasus} & \rotatebox{60}{Forgetting} & \rotatebox{60}{Prototypicality} & \rotatebox{60}{EL2N} & \rotatebox{60}{ACS} \\
        \midrule
        10  & 0.1820 & 0.1850 & 0.1905 & 0.1789 & 0.1917 & \textbf{0.2502} \\
        20  & 0.3122 & 0.3283 & 0.3142 & 0.3394 & 0.3202 & \textbf{0.3851} \\
        30  & 0.3289 & 0.3459 & 0.3366 & 0.3523 & 0.3526 & \textbf{0.3830} \\
        40  & 0.3483 & 0.3547 & 0.3434 & 0.3696 & 0.3674 & \textbf{0.3806} \\
        50  & 0.3675 & 0.3593 & 0.3538 & 0.3747 & 0.3699 & \textbf{0.3820} \\
        60  & 0.3745 & 0.3764 & 0.3721 & 0.3736 & 0.3867 & \textbf{0.3921} \\
        70  & 0.3673 & 0.3788 & 0.3745 & 0.3727 & 0.3860 & \textbf{0.3945} \\
        80  & 0.3731 & 0.3814 & 0.3785 & 0.3763 & 0.3812 & \textbf{0.3958} \\
        90  & 0.3745 & 0.3787 & 0.3809 & 0.3781 & 0.3788 & \textbf{0.3877} \\
        \midrule
        100 & 0.3842 & 0.3842 & 0.3842 & 0.3842 & 0.3842 & 0.3842
        \end{tabular}
    \end{minipage}
    \caption{(L) F1 scores (top) and SelfBLEU diversity (bottom) for CrossNER as a function of subset size, comparing downsampling methods. Horizontal dotted line represents model performance when trained on all available data (no pruning). (R) Tabulated F1 results corresponding with plots. ACS matches or outperforms full-data training with only 20\% of the data.}
    \label{fig:crossner-results-combined}
\end{figure}

\section{Discussion} \label{sec.conclude}
Our experiments convincingly demonstrate that ACS effectively distills large synthetic datasets into smaller, highly representative subsets, significantly enhancing model training efficiency and accuracy.
Several distinctive strength set ACS apart from existing downsampling and filtering methods.

First, ACS reliably identifies remarkably small subsets\textemdash{}often around 20\% or even less of the original synthetic dataset\textemdash{}that allow models to achieve performance matching or surpassing that of models trained on the full dataset.
This capability underscores the potential efficiency gains and practical utility of ACS in real-world scenarios, especially when computational resources or training time are limited.
Second, unlike many alternative methods that require fitting multiple models or extensive hyperparameter tuning to gauge sample importance, ACS does not depend on repeated training iterations.
Instead, our method leverages a straightforward binary search on a similarity graph.
Third, ACS does not rely on label information during the subset selection phase, making it broadly applicable to both supervised and unsupervised scenarios.
This feature notably enhances its versatility, enabling effective deployment in diverse data scenarios without requiring preliminary labeling efforts.
Lastly, ACS explicitly focuses on identifying optimal \emph{collections} of data points rather than individual samples with maximal individual contribution. 
This collection oriented approach ensures that the selected subsets comprehensively represent the overall dataset diversity and structure, rather than emphasizing potentially redundant or outlier points that individually maximize some criterion.
As such, ACS offers a robust, efficient, and versatile approach to synthetic data distillation, delivering substantial improvements in downstream task performance through highly informative and diverse subsets.

\bibliographystyle{acm}
\bibliography{references}

\clearpage
\appendix
\section{Omitted Results}
We here present the empirical analysis of Section~\ref{sec.empirics} on the FewRel and CrossNER datasets.
We further present a sensitivity analysis to the max degree parameter for all of the datasets.

\subsection{Empirical Analysis of ACS} \label{sec:empirics-all}
We begin with the empirical ACS validation of Section~\ref{sec.empirics} for the remaining datasests.
In both instances, we observe consistent monotonicity in the coverage as a function of $k$-selection with decreasing similarity thresholds, as well as improved downstream task performance with coverage values less than 1.0. Figure~\ref{fig:empirical-acs-fewrel} presents the empirical results for the FewRel dataset and Figure~\ref{fig:empirical-acs-crossner} for CrossNER. In both instances, the greedy approximation to max coverage exhibits monotonicity as needed for the binary search procedure. We further see that full coverage is non-optimal in most instances, further motivating our usages of coverage = 0.9 throughout the experimental results.

\begin{figure}[t]
    \centering
        \includegraphics[width=\linewidth]{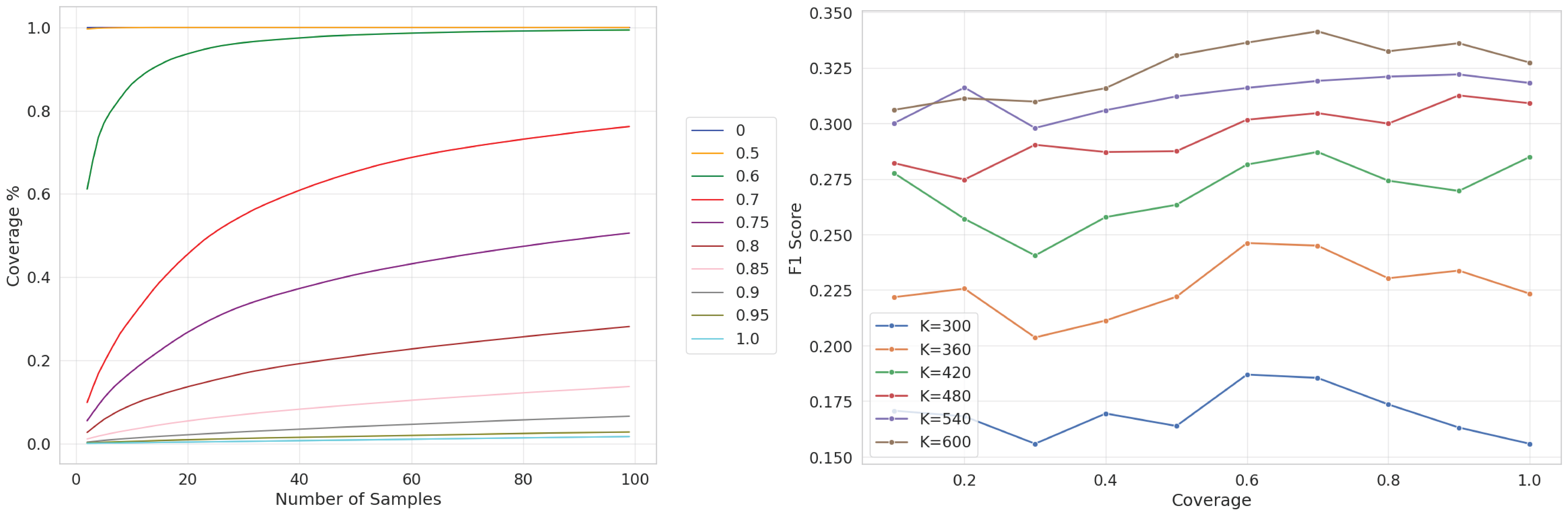}
    \caption{Empirical results for the FewRel dataset. (L) Coverage of data increases with $k$ or when decreasing the similarity threshold. Colors correspond to the fixed similarity thresholds depicted in the legend. (R) Model accuracy as a function of coverage level for the sentiment analysis tasks. Performance peaks at a coverage level below 1.0.}
    \label{fig:empirical-acs-fewrel}
\end{figure}

\begin{figure}[t]
    \centering
        \includegraphics[width=\linewidth]{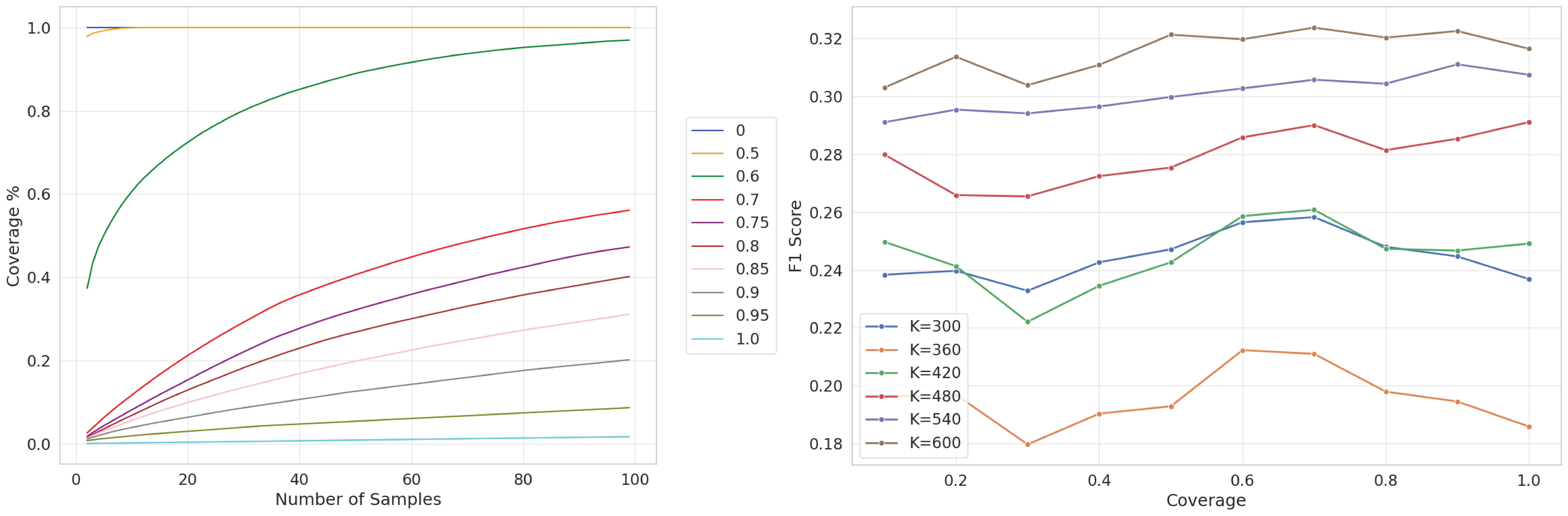}
    \caption{Empirical Results for the CrossNER dataset. (L) Coverage of data increases with $k$ or when decreasing the similarity threshold. Colors correspond to the fixed similarity thresholds depicted in the legend. (R) Model accuracy as a function of coverage level for the sentiment analysis tasks. Performance peaks at a coverage level below 1.0.}
    \label{fig:empirical-acs-crossner}
\end{figure}


\section{Scalability of Adaptive Coverage Sampling} \label{sec:scalable}
In large-scale settings, the computational cost of optimizing the similarity threshold $\tau$ for ACS can become prohibitive due to the $O(n^2)$ complexity of evaluating pairwise similarities. 
Though we can speed up such computations with methods such as Locality Sensitive Hasing (LHS) or hop-spanner methods~\cite{carey2022stars,epasto2021clustering,halcrow2020grale}, we further propose a scalable variant of ACS that conducts threshold selection on a small random subset of the data. 
For a desired downsampling value of $k \ll N$, we uniformly at random select a small subgraph of $N' < N$ nodes and run the ACS procedure on the reduced instance.
Once the optimal edge similarity threshold $\tau^*$ is identified on this subset, it is reused to construct the similarity graph and perform ACS on the \emph{full} dataset. 
This approach significantly reduces computational cost while maintaining effective coverage.

Formally, let $G = (V, E)$ be the similarity graph constructed on the full dataset, where edges are defined between points with similarity exceeding a threshold $\tau$. 
Let $V' \subset V$ denote a uniformly random subsample of size $N'$, and let $G' = (V', E')$ be the induced subgraph. 
For any subset $S \subset V$, we define the normalized coverage as the fraction of nodes in $V$ that are neighbors of some node in $S$ under threshold $\tau$. 
We proceed to show that threshold tuning on the subsample generalizes well to the full dataset, in the following proposition.

\begin{proposition} \label{prop:scalable}
    Let $\tau \in [0,1]$ be fixed. For any subset $S' \subset V'$ of size $k$, let $\text{Cov}_\tau(S'; V') =\frac{|\bigcup_{v \in S} \{ u \in V : \text{sim}(u, v) \ge \tau \}|}{|V|} $ be the normalized coverage. 
    Let $S \subset V$ be the greedy max coverage selection of size $K$ using the same threshold. 
    Then, with high probability over the choice of $V'$, we have:
    \[
        \left| \text{Cov}_\tau(S; V) - \text{Cov}_\tau(S'; V') \right| \le \varepsilon
    \]
    for some $\varepsilon \in O\left( \sqrt{ \frac{\log(1/\delta)}{N'} } \right)$, where $\delta$ is the failure probability.
\end{proposition}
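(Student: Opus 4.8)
The plan is to treat the normalized coverage as an empirical mean of bounded $\{0,1\}$-valued functions and apply a concentration inequality, then upgrade to a uniform bound to accommodate the data-dependent greedy selection. Fix the threshold $\tau$ and, for a candidate center set $T$, define the covered-indicator $f_T(u) = \ind[\exists v \in T : \text{sim}(u,v) \ge \tau] \in \{0,1\}$. Then $\text{Cov}_\tau(T; V) = \E_{u \sim V}[f_T(u)]$ is the population mean while $\text{Cov}_\tau(T; V') = \sfrac{1}{N'} \sum_{u \in V'} f_T(u)$ is its empirical average over the uniform subsample $V'$. Since each $f_T(u) \in [0,1]$ and the elements of $V'$ are drawn uniformly from $V$, the empirical average is an unbiased estimator of the population coverage, which is what makes a concentration argument available.

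First I would establish pointwise concentration. For a \emph{fixed} $T$, Hoeffding's inequality (or Serfling's bound for sampling without replacement) gives $\Pr[\,|\text{Cov}_\tau(T;V') - \text{Cov}_\tau(T;V)| \ge \eps\,] \le 2\exp(-2N'\eps^2)$; setting the right-hand side to $\delta$ yields exactly $\eps = \sqrt{\ln(2/\delta)/(2N')} \in O(\sqrt{\log(1/\delta)/N'})$, matching the claimed rate.

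The main obstacle is that $S'$ is itself a function of $V'$, so pointwise concentration cannot be applied to it directly. To close this gap I would prove a uniform version of the bound, holding simultaneously over all size-$k$ center sets. The relevant set system is the collection of coverage regions, each a union of $k$ similarity neighborhoods thresholded at $\tau$; fixing the base class of single $\tau$-neighborhoods (which has bounded VC dimension under a fixed similarity kernel), the $k$-fold unions have VC dimension $O(k \log k)$. Standard uniform-convergence (VC) bounds then give, with probability $\ge 1-\delta$ over $V'$, that $|\text{Cov}_\tau(T;V') - \text{Cov}_\tau(T;V)| \le \eps$ for \emph{every} size-$k$ set $T$ simultaneously, with $\eps \in O(\sqrt{(k\log k + \log(1/\delta))/N'})$; treating $k$ as fixed recovers the stated rate.

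Finally I would transfer the uniform bound to the two greedy outputs. Writing $|\text{Cov}_\tau(S;V) - \text{Cov}_\tau(S';V')| \le |\text{Cov}_\tau(S;V) - \text{Cov}_\tau(S';V)| + |\text{Cov}_\tau(S';V) - \text{Cov}_\tau(S';V')|$, the second term is at most $\eps$ by the uniform bound applied to $S'$. For the first term I would use that coverage is monotone submodular, so greedy attains a $(1-\sfrac{1}{e})$ fraction of the optimum on each instance; combined with the uniform bound this shows that both greedy values track the common population optimum up to $O(\eps)$, so their difference is also $O(\eps)$. The delicate point here — the step I expect to require the most care — is that the optimal center set for the full instance need not lie inside $V'$, so $\text{OPT}_{V'}$ and $\text{OPT}_{V}$ are not immediately comparable; I would handle this by arguing that, with high probability, the random subsample contains surrogate centers whose $\tau$-neighborhoods approximate those of the full optimum, forcing $\text{OPT}_{V'}$ to be within $O(\eps)$ of $\text{OPT}_{V}$ and thereby completing the bound.
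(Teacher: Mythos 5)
Your opening two steps are sound and in fact more careful than the paper's own argument: you correctly identify that $S'$ is a function of $V'$, so pointwise Hoeffding cannot legitimately be applied to the sampled greedy output, and your fix via uniform convergence over the class of $k$-fold unions of $\tau$-neighborhoods (VC dimension $O(k\log k)$, absorbed into the constant for fixed $k$) is a standard and valid repair. The paper, by contrast, applies Hoeffding to $\text{Cov}_\tau(S';V')$ with the indicators $X_u=\ind[\exists v\in S',\ \text{sim}(u,v)\ge\tau]$ as if $S'$ were fixed in advance, and it bridges the remaining gap with an asserted expectation identity $\E[\text{Cov}_\tau(S';V')]=\frac{|V|(|V'|-1)}{|V'|(|V|-1)}\text{Cov}_\tau(S;V)$ relating the two \emph{greedy} outputs, which is stated without justification (it would hold for a fixed set whose neighborhoods are sampled, not for a set re-optimized on the sample). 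So on the concentration side your route is genuinely different and strictly more rigorous.

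The genuine gap is in your final transfer step. You claim that monotone submodularity plus the $(1-\sfrac{1}{e})$ greedy guarantee, combined with the uniform bound and the surrogate-centers argument, shows ``both greedy values track the common population optimum up to $O(\eps)$.'' That does not follow: greedy guarantees only $\text{Cov}_\tau(S;V)\ge(1-\sfrac{1}{e})\,\mathrm{OPT}_V$, so even granting $|\mathrm{OPT}_{V'}-\mathrm{OPT}_V|\le O(\eps)$, both greedy values are merely sandwiched in the band $[(1-\sfrac{1}{e})\mathrm{OPT}_V-O(\eps),\ \mathrm{OPT}_V]$, whose width is roughly $\mathrm{OPT}_V/e$ --- a constant, not $O(\eps)$. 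Hence the first term of your triangle inequality, $|\text{Cov}_\tau(S;V)-\text{Cov}_\tau(S';V)|$, is not controlled by your argument. Closing it requires a stability statement about the greedy \emph{procedure} itself, e.g.\ a step-by-step coupling showing each sampled-greedy selection has population marginal gain within $O(\eps)$ of the corresponding full-greedy selection (using your surrogate centers at every step, not just for the optimum), which at best yields an $O(k\eps)$ value gap and is delicate because greedy trajectories can diverge after a single near-tie. To be fair, the paper's proof never confronts this difficulty either --- it is buried in the unjustified expectation identity --- so the honest takeaway is that your proposal correctly isolates where the proposition is hard, but the submodularity argument you offer does not prove the greedy-to-greedy comparison as stated.
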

\begin{proof}
Let $G = (V, E)$ be our original similarity graph and let $V' \subseteq V$ be a subset of vertices chosen uniformly at random, with $|V'| = N'$. Consider the induced subgraph $G' = (V', E')$. For a fixed threshold $\tau \in [0,1]$, we define coverage for a subset $S \subseteq V$ as
\[
\text{Cov}_\tau(S; V) = \frac{|\bigcup_{v \in S}\{u \in V : \text{sim}(u,v) \ge \tau\}|}{|V|}.
\]

Let $S \subseteq V$ and $S' \subseteq V'$ each be greedy max coverage selections of size $k$ using threshold $\tau$ on graphs $G$ and $G'$, respectively. Our goal is to show that, with high probability,
\[
|\text{Cov}_\tau(S; V) - \text{Cov}_\tau(S'; V')| \le \varepsilon,
\]
for $\varepsilon \in O\left(\sqrt{\frac{\log(1/\delta)}{N'}}\right)$, with failure probability at most $\delta$.

We first compute expected relationships. The expected number of edges in the induced subgraph $G'$ is
\[
\mathbb{E}[|E'|] = \frac{\binom{|V'|}{2}}{\binom{|V|}{2}} |E|,
\]
and the expected degree on $G'$, $d'$, is
\[
\mathbb{E}[d'] = \frac{|V'|-1}{|V|-1} d_{avg}.
\]
Thus, for a fixed sets $S \subseteq V$ and $S' \subseteq V'$ of size $k$, we have
\[
\mathbb{E}[\text{Cov}_\tau(S'; V')] = \frac{|V|(|V'|-1)}{|V'|(|V|-1)}\text{Cov}_\tau(S; V).
\]

Define the indicator random variables
\[
X_u = \mathbf{1}[\exists v \in S', \text{sim}(u,v) \ge \tau], \quad u \in V',
\]
so that coverage can be expressed as
\[
\text{Cov}_\tau(S'; V') = \frac{1}{|V'|}\sum_{u \in V'} X_u.
\]
Applying Hoeffding's inequality, we have for any $t > 0$,
\[
\Pr\left(|\text{Cov}_\tau(S'; V') - \mathbb{E}[\text{Cov}_\tau(S'; V')]| \ge t\right) \le 2\exp(-2|V'|t^2).
\]
Choosing $t = \sqrt{\frac{\log(2/\delta)}{2|V'|}}$, we get with probability at least $1 - \delta$,
\[
|\text{Cov}_\tau(S'; V') - \mathbb{E}[\text{Cov}_\tau(S'; V')]| \le \sqrt{\frac{\log(2/\delta)}{2|V'|}}.
\]

Therefore, using these inequalities, we bound
\begin{align*}
|\text{Cov}_\tau(S; V) - \text{Cov}_\tau(S'; V')|
&\le |\text{Cov}_\tau(S; V) - \Ex{\text{Cov}_\tau(S'; V)}| + |\Ex{\text{Cov}_\tau(S'; V')} - \text{Cov}_\tau(S'; V')| \\
&\le \sqrt{\frac{\log(2/\delta)}{2|V'|}} + O\left(\frac{|V|-|V'|}{|V'|(|V|-1)}\right).
\end{align*}

Assuming $|V'|$ sufficiently large relative to $|V|$, the second term becomes negligible compared to the Hoeffding term.
Hence, we conclude that with probability at least $1 - \delta$,
\[
|\text{Cov}_\tau(S; V) - \text{Cov}_\tau(S'; V')| \le \varepsilon,
\quad \text{where} \quad \varepsilon = O\left(\sqrt{\frac{\log(1/\delta)}{|V'|}}\right).
\]

This completes the proof.
\end{proof}

This proposition establishes that threshold selection on a small sample yields an accurate coverage estimate on the full dataset, with the accuracy improving for larger datasets.

To validate this claim empirically, we conducted a series of experiments across the datasets used in the main text (sentiment analysis, relation extraction, and named entity recognition). 
In each setting, we selected a random subset of the data at varying proportions, ranging from very small to nearly the full dataset. 
For each subset, we used binary search to identify the threshold $\tau^*$ such that the greedy ACS procedure on the subset achieved a fixed target of $90\%$ coverage with $k$ examples. 
We then applied this same threshold $\tau^*$ to construct the similarity graph for the full dataset and ran the greedy max coverage to select a size-$K$ subset, measuring the resulting coverage over all data points.

Figure 10 summarizes the results. 
Each plot corresponds to a different dataset (SST2, FewRel, or CrossNER). 
The x-axis represents the fraction of the dataset used to compute the optimial threshold, and the y-axis shows the actual coverage obtained on the full dataset using that threshold. A shaded band indicates an $\varepsilon$-envelope centered at the target coverage of $90\%$ where $\epsilon$ is set to be $5 \times 10^{-3}$. 
Across all settings, we observe that even small subsamples, often less than $20\%$ of the full dataset, yield thresholds that generalize well. 
As the sample size increases, the coverage rapidly converges to the target, and variance remains low throughout.

These results provide strong empirical support for the scalable ACS approach. By selecting a threshold on a small, randomly drawn subset, we can achieve nearly identical coverage behavior on the full dataset, enabling efficient and accurate training data selection in large-scale scenarios without repeated expensive graph construction or threshold tuning.

\begin{figure}[t]
    \centering
    \includegraphics[width=\linewidth]{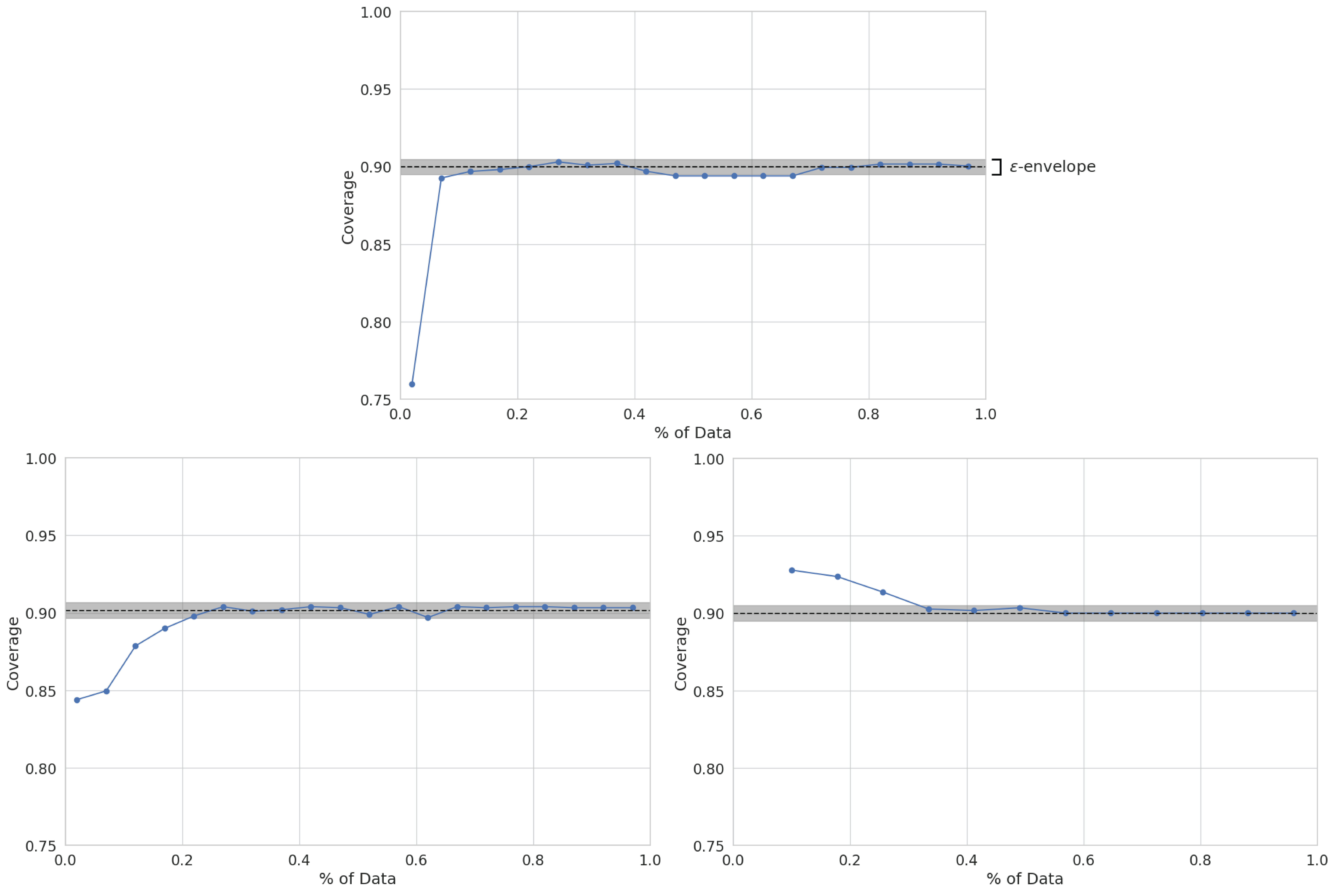}
    \caption{Coverage transfer from subsample to full dataset. Each point corresponds to a threshold $\tau^*$ optimized on a random subset of a given size and evaluated for coverage on the full dataset. The gray band denotes a small tolerance range around the $90\%$ target. Results show threshold transfer achieves accurate and stable coverage across various dataset sizes. (Top Left) SST2, (Bottom Left), CrossNER, (Bottom Right) FewRel.}
    \label{fig:scalable-acs}
\end{figure}

\begin{figure}[t]
    \centering
    \includegraphics[width=\linewidth]{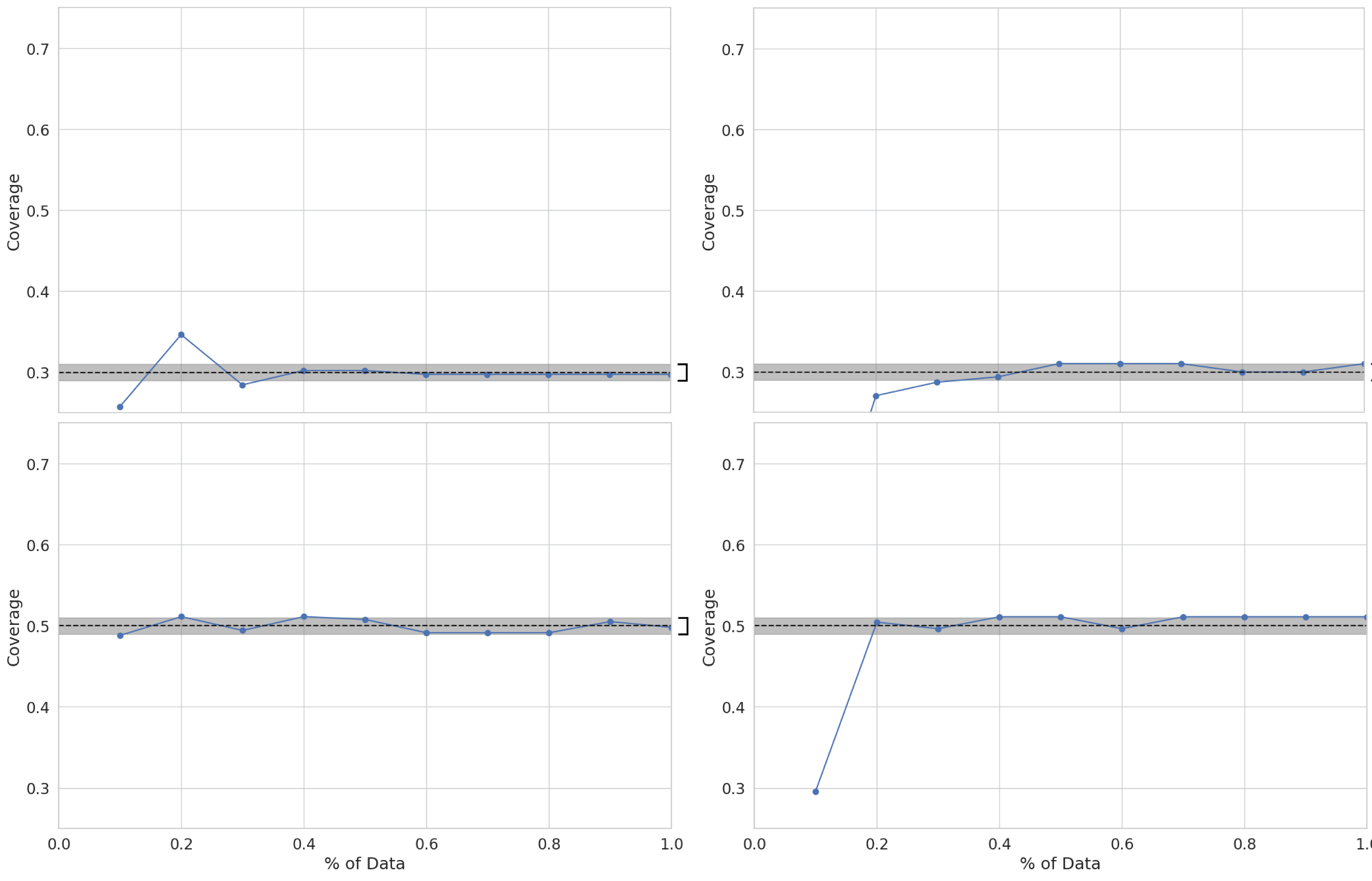}
    \caption{Coverage transfer from subsample to full dataset. Each point corresponds to a threshold $\tau^*$ optimized on a random subset of a given size and evaluated for coverage on the full dataset. The gray band denotes a small tolerance range around the $30\%$ and $50\%$ targets. Results show threshold transfer achieves accurate and stable coverage across various dataset sizes. (Left) SST2, and (Right) CrossNER.}
    \label{fig:scalable-acs-2}
\end{figure}

 We note that the above experiments, in line with Proposition~\ref{prop:scalable} do not impose any max degree constraints on the similarity graph.
 We demonstrate that even when such constraints are imposed, the scalability of optimal threshold remains.
 In Figure~\ref{fig:scalable-acs-degree}, we again impose the max degree constraint of $\sfrac{2 \cdot c \cdot N}{k}$ and set a target coverage of 0.5.

\begin{figure}[t]
    \centering
    \includegraphics[width=\linewidth]{figures/scalable-w-degree.png}
    \caption{Coverage transfer from subsample to full dataset. Each point corresponds to a threshold $\tau^*$ optimized on a random subset with max degree constraint of a given size and evaluated for coverage on the full dataset. The gray band denotes a small tolerance range around the $50\%$ target. Results show threshold transfer achieves accurate and stable coverage across various dataset sizes. (Left) SST2, (Middle) FewRel and (Right) CrossNER.}
    \label{fig:scalable-acs-degree}
\end{figure}

\end{document}